\newtheorem{theorem}{\bf Theorem}[section]
\newtheorem{lemma}[theorem]{\bf Lemma}
\newtheorem{proposition}[theorem]{\bf Proposition}
\newenvironment{proof}{\noindent{\em Proof:}}{\quad \hfill$\Box$\vspace{2ex}}
\newtheorem{definition}[theorem]{\bf Definition}
\def \bN {\Bbb N}
\def \bZ {\Bbb Z}
\def \bF {\Bbb F}
\def \bR {\Bbb R}
\def \bC {\Bbb C}
\def \bF {\Bbb F}
\def \bP {\Bbb P}
\def \bQ {\Bbb Q}
\def \cB {{\cal B}}
\def \cD {{\cal D}}
\def \cF {{\cal F}}
\def \cH {{\cal H}}
\def \cL {{\cal L}}
\def \cS {{\cal S}}
\def \cR {{\cal R}}
\def \cZ {{\cal Z}}
\def \cW {{\cal W}}
\def \and {\, \mbox{\rm and}\, }
\def \span {\,{\rm span}\,}
\def \supp {\,{\rm supp}\,}
\newcommand{\Rmnum}[1]{\expandafter\@slowromancap\romannumeral #1@}
\begin{document}

\title{Universalities of Reproducing Kernels Revisited}

\author{Benxun Wang\thanks{School of Mathematics and Computational
Science, Sun Yat-sen University, Guangzhou 510275, P. R. China. E-mail address: {\it wangbx3@mail2.sysu.edu.cn}.}\quad and\quad Haizhang Zhang\thanks{School of Mathematics and Computational
Science and Guangdong Province Key Laboratory of Computational
Science, Sun Yat-sen University, Guangzhou 510275, P. R. China. E-mail address: {\it zhhaizh2@mail.sysu.edu.cn}. Supported in part by Natural Science Foundation of China under grants 11222103 and 11101438, and by the US Army Research
 Office.}}
 \date{}
\maketitle

%
%

\begin{abstract}
Kernel methods have been widely applied to machine learning and other questions of approximating an unknown function from its finite sample data. To ensure arbitrary accuracy of such approximation, various denseness conditions are imposed on the selected kernel. This note contributes to the study of universal, characteristic, and $C_0$-universal kernels. We first give simple and direct description of the difference and relation among these three kinds of universalities of kernels. We then focus on translation-invariant and weighted polynomial kernels. A simple and shorter proof of the known characterization of characteristic translation-invariant kernels will be presented. The main purpose of the note is to give a delicate discussion on the universalities of weighted polynomial kernels.\newline

\noindent{\bf Keywords:} kernel methods, universal
kernels, characteristic kernels, density, translation-invariant kernels, weighted polynomial kernels.
\end{abstract}

\section{Introduction}
\setcounter{equation}{0}
Many scientific questions can be mathematically formulated as the learning of an unknown function from its finite sample data. Suppose the unknown target function $f_0$ lives on the input space $X$ and its sample data on the finite sampling points $x_1,x_2,\cdots, x_n\in X$ are available. We human beings learn from experience. By this intuition, a predictor function learned from the finite sample data of $f_0$ on $x_1,x_2,\cdots,x_n$ should be of the form
\begin{equation}\label{section}
\sum_{j=1}^n c_j K(x_j,\cdot),
\end{equation}
where $c_j$'s are constants and $K$ is a function on $X\times X$ that measures the similarity between inputs from $X$.

The inner product is a natural mathematical tool of measuring similarity. By this consideration, the function $K$ in (\ref{section}) is chosen to be of the form
$$
K(x,y)=\langle \Phi(x),\Phi(y)\rangle_\cW,\ \ x,y\in X
$$
where $\Phi$ is a mapping from $X$ to a Hilbert space $\cW$ with inner product $\langle \cdot,\cdot\rangle_\cW$. It has been understood that a function $K$ on $X\times X$ has the above inner product representation if and only if it is a positive-definite function \cite{Aronszajn}, that is, if for all finite points $x_1,x_2,\cdots,x_n\in X$, the matrix
$$
[K(x_j,x_k)]_{j,k=1}^n
$$
is symmetric and positive semi-definite. Moreover, for every positive-definite function $K$ on $X\times X$ there exists a unique Hilbert space, denoted as $\cH_K$, of certain functions on $X$ such that $K(x,\cdot)\in \cH_K$ for all $x\in X$ and
\begin{equation}\label{reproducing}
f(x)=\langle f,K(x,\cdot)\rangle_{\cH_K}\mbox{ for all }f\in\cH_K, \ x\in X.
\end{equation}
By equation (\ref{reproducing}), for each $x\in X$, the point evaluation functional $f\to f(x)$ is a continuous on $\cH_K$. It implies that $\cH_K$ is a {\it reproducing kernel Hilbert space} on $X$. For the sake of (\ref{reproducing}), positive-definite functions are usually called {\it reproducing kernels} in machine learning.

A pleasant coincidence is that the minimizer for all feasible regularization learning algorithms in $\cH_K$ must have the form (\ref{section}). Specifically, for any continuous loss function $\cL:\bR^n\to[0,+\infty)$ and non-decreasing regularizer $\phi:[0,+\infty)\to [0,+\infty)$, every minimizer of
\begin{equation}\label{kernelmethod}
\inf_{f\in\cH_K}\cL(f(x_1),f(x_2),\cdots,f(x_n))+\phi(\|f\|_{\cH_K})
\end{equation}
is of the form (\ref{section}). The result is known as the {\it representer theorem} in machine learning \cite{KW}. The hypothesis error in the error estimate of regularization learning algorithms vanishes automatically due to the representer theorem, \cite{CuckerSmale,CuckerZhou}. Learning algorithms (\ref{kernelmethod}) are the typical {\it kernel methods} \cite{Bishop,Evgeniou,Sch} in machine learning. Summarizing the above discussion, we conclude that kernel methods have the natural interpretation of learning from experience and are based on the sound mathematical theory of reproducing kernel Hilbert spaces.

The possibility of approximating the unknown target function from functions of the form (\ref{section}) should be first addressed in kernel methods. This denseness question motivates extensive study on universalities of reproducing kernels \cite{Steinwart1,Micchelli,Sriper,Sriper3,Sriper1,CMPY,Carmeli}. Assume from now on that the input space $X$ is a metric space. We also denote for each compact subset $\cZ\subseteq X$ by $C(\cZ)$ the space of continuous functions on $\cZ$ equipped with the maximum norm
$$
\|f\|_{C(\cZ)}:=\max\{x\in \cZ:|f(x)|\},
$$
and denote by $C_0(X)$ the space of continuous functions on $X$ that vanish at infinity. The study of universal kernels was initiated by Steinwart \cite{Steinwart1}, who posed the question of whether the function in (\ref{section}) can approximate any continuous target function arbitrarily well on any compact subset of the input space as the number of sampling points increases. Apparently, this is possible if and only if the linear span of $\{K(x,\cdot):x\in \cZ\}$ is dense in $C(\cZ)$ for all compact $\cZ\subseteq X$. This leads to the definition of universal kernels in \cite{Micchelli}.

\begin{definition}\label{universalkernel}
Let $X$ be a metric space and $K$ a continuous reproducing kernel on $X$. We call $K$ a {\it universal kernel} on $X$ if for every compact subset $\cZ\subseteq X$, $\span\{K(x,\cdot):x\in\cZ\}$ is dense in $C(\cZ)$.
\end{definition}

Two other universalities of reproducing kernels appear in the study of reproducing kernel Hilbert space embedding of probability measures \cite{Sriper,Sriper3,Sriper1} and in the construction of reproducing kernel Banach spaces with the $\ell^1$-norm \cite{Song2011,Song2013}.
\begin{definition}\label{chaandc0universal}
Let $X$ be a metric space and $K$ a reproducing kernel on $X$ such that $K(x,\cdot)\in C_0(X)$ for all $x\in X$. We call $K$ a {\it characteristic kernel} if the mapping from the set of probability Borel measures on $X$ to $\cH_K$ given by
$$
\bP\to \int_X K(\cdot,t)d\bP(t)
$$
is injective. We call $K$ a {\it $C_0$-universal kernel} if $\span\{K(x,\cdot):x\in X\}$ is dense in $C_0(X)$.
\end{definition}

Characterizations of universal kernels and sufficient conditions for various reproducing kernels to be universal were provided in \cite{Steinwart1,Micchelli}. The obtained results have also been established for vector-valued reproducing kernels \cite{CMPY,Carmeli}. Characteristic and $C_0$-universal kernels have been extensively studied in \cite{Sriper,Sriper3,Sriper1}. This note endeavors to contribute to the study of universalities of reproducing kernels. Firstly, translation-invariant kernels on Euclidean spaces constitute an important class of reproducing kernels. It has been obtained in \cite{Sriper} that a continuous translation-invariant kernel is characteristic if and only if its Fourier transform is supported everywhere. As a first contribution of this note, we give a simple and shorter proof of this important result in Section 3. It was pointed both in \cite{Micchelli} and \cite{Sriper} that if the support of the Fourier transform of a continuous translation-invariant kernel is a uniqueness set for the set of all the entire functions then the kernel is universal. Our discussion in Section 3 will reveal that this sufficient condition is too strong and is not necessary. Another contribution of this note is that we shall give a detailed discussion on the universalities of polynomial kernels and weighted polynomial kernels, which were barely discussed in \cite{Micchelli} or \cite{Sriper,Sriper3,Sriper1}.

\section{Characterization of Universalites by Borel Measures}
\setcounter{equation}{0}

The purpose of this section is to introduce necessary preliminaries and characterizations of universalities of reproducing kernels for later use.

From now on, $X$ stands for a prescribed metric space and all the function spaces are over the field $\bR$ of real numbers. The Banach space $C_0(X)$ consists of all the continuous functions $f$ on $X$ with the property that for all $\varepsilon>0$, $\{x\in X:|f(x)|\ge \varepsilon\}$ is compact in $X$. The norm on $C_0(X)$ is also the maximum norm, that is, for all $f\in C_0(X)$,
$$
\|f\|_{C_0(X)}=\max\{|f(x)|:x\in X\}.
$$
Denote by $\cB(Y)$ the set of all the finite signed Borel measures on a metric space $Y$. The dual space of $C_0(X)$ is $\cB(X)$, \cite{Rudin1}. It implies that $T$ is a continuous linear functional on $C_0(X)$ if and only if there exists a Borel measure $\mu\in\cB(X)$ such that
$$
T(f)=\int_X f(t)d\mu(t),\ \ f\in C_0(X).
$$
For each compact subset $\cZ\subseteq X$, the dual of $C(\cZ)$ is also $\cB(\cZ)$.

The following characterization of denseness is a direct corollary of the Hahn-Banach theorem in functional analysis, \cite{Conway}.

\begin{lemma}\label{HahnBanach}
Let $\cB$ be a Banach space and $A\subseteq \cB$. Then the linear span of $A$ is dense in $\cB$ if and only if there does not exist a nontrivial continuous linear functional on $\cB$ that vanishes on $A$.
\end{lemma}

With the above lemma, one immediately obtains the following characterization of universalities of reproducing kernels.

\begin{theorem}\label{characterizationsthree}
Let $K$ be a reproducing kernel on $X$ such that $K(x,\cdot)\in C_0(X)$ for all $x\in X$. Then the followings hold true:
\begin{description}
\item[(i)] The kernel $K$ is universal on $X$ if and only if for each compact subset $\cZ\subseteq X$, there does not exist a nonzero Borel measure $\mu\in \cB(\cZ)$ such that
    $$
    \int_\cZ K(x,t)d\mu(t)=0\mbox{ for all }x\in \cZ.
    $$

\item[(ii)] The kernel $K$ is $C_0$-universal on $X$ if and only if there does not exist a nonzero Borel measure $\mu\in \cB(X)$ such that
    $$
    \int_X K(x,t)d\mu(t)=0\mbox{ for all }x\in X.
    $$

\item[(iii)] The kernel $K$ is characteristic on $X$ if and only if there does not exist a nonzero Borel measure $\mu\in \cB(X)$ such that $\mu(X)=0$ and
    \begin{equation}\label{killKinc0}
    \int_X K(x,t)d\mu(t)=0\mbox{ for all }x\in X.
    \end{equation}
\end{description}
\end{theorem}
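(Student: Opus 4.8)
The plan is to derive all three statements from Lemma~\ref{HahnBanach} by identifying, in each case, the correct Banach space $\cB$ and subset $A$, and then translating ``no nontrivial continuous linear functional vanishes on $A$'' into the measure-theoretic conditions via the Riesz representation of the relevant dual space. Throughout I will use the elementary but crucial observation that, for a fixed Borel measure $\mu$ on a set $Y$ (either $\cZ$ or $X$) with $K(x,\cdot)$ integrable, the functional $f\mapsto \int_Y f\,d\mu$ annihilates the function $K(x,\cdot)$ precisely when $\int_Y K(x,t)\,d\mu(t)=0$; since $K$ is symmetric this is the same as $\int_Y K(t,x)\,d\mu(t)=0$, which is the quantity appearing in the statement.

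For part (i), take $\cB=C(\cZ)$ and $A=\{K(x,\cdot):x\in\cZ\}$. By Definition~\ref{universalkernel}, $K$ is universal iff $\span A$ is dense in $C(\cZ)$ for every compact $\cZ$; by Lemma~\ref{HahnBanach} this fails for a given $\cZ$ iff there is a nontrivial continuous linear functional on $C(\cZ)$ vanishing on $A$. Since the dual of $C(\cZ)$ is $\cB(\cZ)$ (stated above), such a functional corresponds to a nonzero $\mu\in\cB(\cZ)$ with $\int_\cZ K(x,t)\,d\mu(t)=0$ for all $x\in\cZ$. I should note here that $K(x,\cdot)$ restricted to the compact set $\cZ$ is continuous, hence bounded, so the integrals make sense and membership of $K(x,\cdot)$ in $C(\cZ)$ is automatic from continuity of $K$ --- the hypothesis $K(x,\cdot)\in C_0(X)$ is not even needed for (i). Part (ii) is the exact analogue with $\cB=C_0(X)$, $A=\{K(x,\cdot):x\in X\}$, Definition~\ref{chaandc0universal}, and the fact that the dual of $C_0(X)$ is $\cB(X)$; here the hypothesis $K(x,\cdot)\in C_0(X)$ is what guarantees $A\subseteq\cB$ and that the integrals $\int_X K(x,t)\,d\mu(t)$ converge for finite signed Borel $\mu$.

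Part (iii) requires slightly more care because the relevant object is a priori a map on probability measures, not a linear functional, and injectivity must be converted into a statement about signed measures of total mass zero. The step I expect to be the main obstacle is this reduction. The plan: the map $\bP\mapsto\int_X K(\cdot,t)\,d\bP(t)$, a priori defined on probability measures, extends by linearity to all of $\cB(X)$, and by a standard Jordan-decomposition argument (writing a signed measure with $\mu(X)=0$ as a positive multiple of a difference of two probability measures) one checks that this extension is injective on the hyperplane $\{\mu\in\cB(X):\mu(X)=0\}$ iff the original map on probability measures is injective. Two probability measures $\bP_1,\bP_2$ have the same image iff $\mu:=\bP_1-\bP_2$ satisfies $\mu(X)=0$ and $\int_X K(\cdot,t)\,d\mu(t)=0$. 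Hence $K$ is characteristic iff the only $\mu\in\cB(X)$ with $\mu(X)=0$ and $\int_X K(x,t)\,d\mu(t)=0$ for all $x\in X$ is $\mu=0$, which is exactly~(\ref{killKinc0}). One subtlety to address is why $\int_X K(\cdot,t)\,d\mu(t)=0$ as an element of $\cH_K$ is equivalent to its pointwise vanishing $\int_X K(x,t)\,d\mu(t)=0$ for every $x$; this follows from the reproducing property~(\ref{reproducing}), since the value at $x$ of the element $\int_X K(\cdot,t)\,d\mu(t)\in\cH_K$ is obtained by pairing it with $K(x,\cdot)$. Once this equivalence and the Jordan-decomposition reduction are in place, (iii) follows immediately.
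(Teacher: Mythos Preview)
Your proposal is correct and follows essentially the same route as the paper: parts (i) and (ii) are immediate from Lemma~\ref{HahnBanach} and the Riesz representation of the duals of $C(\cZ)$ and $C_0(X)$, and part (iii) is handled by the Hahn--Jordan decomposition argument writing a mass-zero signed measure as a scalar multiple of a difference of probability measures. Your remark that vanishing of $\int_X K(\cdot,t)\,d\mu(t)$ in $\cH_K$ is equivalent to pointwise vanishing via the reproducing property~(\ref{reproducing}) makes explicit a step the paper's proof leaves implicit; note also (as the paper does not) the trivial degenerate case $\mu^{+}(X)=0$ in the Jordan decomposition, where $\mu=0$ directly.
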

\begin{proof}
Statements (i) and (ii) follow immediately from Lemma \ref{HahnBanach}. To confirm (iii), suppose first that there does not exist a nonzero Borel measure $\mu\in \cB(X)$ satisfying $\mu(X)=0$ and equation (\ref{killKinc0}). Assume that $\bP,\bQ$ are two probability Borel measures on $X$ such that
$$
\int_XK(x,t)d\bP(t)-\int_XK(x,t)d\bQ(t)=0\mbox{ for all }x\in X.
$$
Then we have $(\bP-\bQ)(X)=1-1=0$ and
$$
\int_X K(x,t)d(\bP-\bQ)(t)=0\mbox{ for all }x\in X.
$$
By our assumption, $\bP-\bQ=0$. Thus, $K$ is characteristic. To see the converse, suppose that $K$ is characteristic and $\mu\in\cB(X)$ satisfies $\mu(X)=0$ and equation (\ref{killKinc0}). By the Hahn-Jordan decomposition theorem (see, \cite{Dudley}, Theorem 5.6.1), there exist unique positive Borel measures $\mu^+$ and $\mu^-$ on $X$ such that $\mu=\mu^+-\mu^-$. As $\mu(X)=0$, $\mu^+(X)-\mu^-(X)=0$. Let $C=\mu^+(X)$. Then
 $$
 \mu=C(\mathbb{P}-\mathbb{Q}),
 $$
where $\bP=\mu^+/C$ and $\bQ=\mu^{-}/C$ are two probability Borel measures. By (\ref{killKinc0}) and by fact that $K$ is characteristic, $\mu^{+}=\mu^{-}$, yielding that $\mu=0$. The proof is complete.
\end{proof}

One can draw a few conclusions on the relations among the three kinds of universalities of reproducing kernels. By (ii) and (iii) in the above theorem, a $C_0$-universal kernel must be characteristic. By Urysohn's lemma in topology (see, for example, \cite{Munkres}, page 207), every continuous function on a compact subset of the metric space $X$ can be extended to a function in $C_0(X)$. Using this result, one sees from definitions \ref{universalkernel} and \ref{chaandc0universal} that a $C_0$-universal kernel must also be universal. For more discussion on the relations, see \cite{Sriper3,Sriper1}.

\section{Translation Invariant Kernels}
\setcounter{equation}{0}
Reproducing kernels $K$ on $\bR^d$ that are translation-invariant in the sense that
$$
K(x,y)=K(x+a,y+a)\mbox{ for all x,y,a}\in \bR^d
$$
are of particular importance in machine learning. The celebrated Bochner theorem \cite{Bochner2} asserts that $K$ is a continuous translation-invariant reproducing kernel on $\bR^d$ if and only if there exists a finite positive Borel measure $\nu$ on $\bR^d$ such that
\begin{equation}\label{bochnercha}
K(x,y)=\int_{\bR^d}e^{i(x-y)^T\xi}d\nu(\xi),\ \ x,y\in\bR^d.
\end{equation}
Let $K$ be given by (\ref{bochnercha}). An important result obtained in the studies \cite{Sriper} of reproducing kernel Hilbert spaces embedding of probability measure is that $K$ is characteristic if and only if $\supp\nu=\bR^d$. Recall that a point $x_0$ belongs to the support $\supp\nu$ of a positive Borel measure $\nu$ if for any open subset $V\subseteq \bR^d$ that contains $x_0$, $\nu(V)>0$. A main purpose of this section is to give a shorter proof of this result. Moreover, it was both noted in \cite{Sriper} and \cite{Micchelli} that when $\supp\nu$ is a uniqueness set for all the entire functions on $\bC^d$ then $K$ is universal. Another objective of this section is to present delicate discussion on the condition for $K$ defined by (\ref{bochnercha}) to be universal. In particular, one shall see that the uniqueness condition is not necessary.

To fulfill the two purposes, we shall need basic facts on distributions and the Fourier transform (see, for example, \cite{Fourier}). Denote by $\cD(\bR^d)$ the space of infinitely differentiable functions on $\bR^d$ with compact support and by $\cS(\bR^d)$ the Schwartz class of rapidly decreasing infinitely differentiable functions on $\bR^d$. There hold the relations that for all $p\in[1,+\infty]$,
$$
\cD(\bR^d)\subseteq\cS(\bR^d)\subseteq L^p(\bR^d).
$$
The dual spaces of $\cD(\bR^d)$ and $\cS(\bR^d)$ are denoted by $\cD'(\bR^d)$ and $\cS'(\bR^d)$, respectively. Elements in $\cD'(\bR^d)$ are called distributions. In particular, those in $\cS'(\bR^d)\subseteq \cD'(\bR^d)$ are called {\it tempered distributions}.

The Fourier transform $\hat{f}$ and the inverse Fourier transform $\check{f}$ of $f\in L^1(\bR^d)$ are respectively defined by
$$
\hat{f}(\xi):=\frac{1}{(2\pi)^{d/2}}\int_{\bR^d}e^{-ix^T\xi}f(x)dx, \ \ \xi\in\bR^d
$$
and $\check{f}(\xi)=\hat{f}(-\xi)$, $\xi\in\bR^d$. For all $\varphi\in\cS(\bR^d)$, $\hat{\varphi}\in\cS(\bR^d)$. Furthermore, the Fourier transform is continuous on $\cS(\bR^d)$. This allows us to define the Fourier transform on tempered distributions as follows:
$$
\hat{T}(\varphi):=T(\hat{\varphi}),\ \ T\in\cS'(\bR^d),\ \varphi\in\cS(\bR^d).
$$
The Fourier transform of a tempered distribution remains a tempered distribution. In particular, a signed Borel measure $\mu\in\cB(\bR^d)$ corresponds to a tempered distribution defined by
\begin{equation}\label{distributionmu}
\mu(\varphi):=\int_{\bR^d}\varphi(t)d\mu(t),\ \ \varphi\in\cS(\bR^d).
\end{equation}
It can be verified by the Fubini theorem that the Fourier transform of this distribution is
$$
\hat{\mu}(\xi)=\int_{\mathbb{R}^d}e^{-ix^T\xi}d\mu(x),\ \ \xi\in\bR^d,
$$
which is bounded and uniformly continuous on $\mathbb{R}^d$.

As a final preparation, we make the following simple observation.

\begin{lemma}\label{reducetosupport}
Let $K$ be a translation-invariant kernel on $\bR^d$ given by (\ref{bochnercha}), $\cZ$ a Borel subset in $\bR^d$ and $\mu\in \cB(\cZ)$. Then
\begin{equation}\label{reducetosupporteq1}
\int_\cZ K(x,t)d\mu(t)=0\mbox{ for all }x\in\cZ
\end{equation}
if and only if
\begin{equation}\label{reducetosupporteq2}
\hat{\mu}(\xi)=\int_{\cZ}e^{-ix^T\xi}d\mu(x)=0\mbox{ for all }\xi\in\supp\nu.
\end{equation}
\end{lemma}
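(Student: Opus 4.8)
The plan is to unfold Bochner's representation (\ref{bochnercha}) and reduce everything to the Fourier transform $\hat\mu$. I regard $\mu$ as a finite signed Borel measure on $\bR^d$ concentrated on $\cZ$. Since the integrand $e^{i(x-t)^T\xi}$ has modulus $1$ while both $\nu$ and $|\mu|$ are finite, Fubini's theorem permits interchanging the order of integration, yielding, for every $x\in\bR^d$,
\begin{equation}\label{gFourier}
g(x):=\int_\cZ K(x,t)\,d\mu(t)=\int_{\bR^d}e^{ix^T\xi}\Big(\int_\cZ e^{-it^T\xi}\,d\mu(t)\Big)\,d\nu(\xi)=\int_{\bR^d}e^{ix^T\xi}\,\hat\mu(\xi)\,d\nu(\xi).
\end{equation}
Everything should follow from this single identity.

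For the direction (\ref{reducetosupporteq2}) $\Rightarrow$ (\ref{reducetosupporteq1}), one notes that $\nu$ is concentrated on $\supp\nu$; so if $\hat\mu$ vanishes on $\supp\nu$ then the integrand in (\ref{gFourier}) vanishes $\nu$-almost everywhere and hence $g\equiv 0$ on all of $\bR^d$, in particular on $\cZ$. For the converse, I would assume $g(x)=0$ for all $x\in\cZ$ and pair $g$ with $\mu$ itself: because $\mu$ is real-valued, $\int_\cZ e^{ix^T\xi}\,d\mu(x)=\overline{\hat\mu(\xi)}$, so a second application of Fubini's theorem (the integrand in (\ref{gFourier}) is now dominated by the constant $|\mu|(\cZ)$, and $\nu$, $|\mu|$ are finite) gives
$$
0=\int_\cZ g(x)\,d\mu(x)=\int_{\bR^d}\hat\mu(\xi)\,\overline{\hat\mu(\xi)}\,d\nu(\xi)=\int_{\bR^d}|\hat\mu(\xi)|^2\,d\nu(\xi),
$$
whence $\hat\mu=0$ $\nu$-almost everywhere.

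It then remains to upgrade ``$\hat\mu=0$ $\nu$-a.e.'' to ``$\hat\mu(\xi)=0$ for every $\xi\in\supp\nu$''. This uses only that $\hat\mu$ is continuous on $\bR^d$: were $\hat\mu(\xi_0)\neq 0$ for some $\xi_0\in\supp\nu$, then $|\hat\mu|$ would be bounded below by a positive constant on some open ball $V\ni\xi_0$, and $\nu(V)>0$ by the definition of the support, forcing $\int_V|\hat\mu|^2\,d\nu>0$, a contradiction. The only steps that need any care are the two invocations of Fubini's theorem — both justified by boundedness of the integrand together with finiteness of $\nu$ and $|\mu|$ — and this last continuity argument; I do not anticipate a genuine obstacle here.
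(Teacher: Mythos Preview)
Your proof is correct and follows essentially the same approach as the paper: apply Fubini to rewrite $\int_\cZ K(x,t)\,d\mu(t)$ as $\int e^{ix^T\xi}\hat\mu(\xi)\,d\nu(\xi)$, then integrate against $d\bar\mu(x)$ (equivalently $d\mu(x)$, since measures here are real) to obtain $\int|\hat\mu|^2\,d\nu=0$. If anything, you are more explicit than the paper in spelling out the continuity argument that upgrades ``$\hat\mu=0$ $\nu$-a.e.'' to ``$\hat\mu=0$ on all of $\supp\nu$'', which the paper leaves as a one-line implication.
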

\begin{proof}
Suppose first that (\ref{reducetosupporteq2}) holds true. By Fubini's theorem, we get for all $x\in \cZ$ that
$$
\int_\cZ K(x,t)d\mu(t)=\int_\cZ \int_{\bR^d}e^{i(x-t)^T\xi}d\nu(\xi)d\mu(t)=\int_{\bR^d}e^{ix^T\xi}d\nu(\xi)\int_\cZ e^{-it^T\xi}d\mu(t)=\int_{\supp\nu}e^{ix^T\xi}\hat{\mu}(\xi)d\nu(\xi)=0.
$$
Conversely, suppose that (\ref{reducetosupporteq1}) is true. Then for all $x\in\cZ$,
$$
\int_{\supp\nu}e^{ix^T\xi}\hat{\mu}(\xi)d\nu(\xi)=0.
$$
Integrating both sides of the above equation with respect to $d\bar{\mu}(x)$ on $x\in\cZ$ yields by the Fubini theorem that
$$
\int_{\supp\nu}|\hat{\mu}(\xi)|^2d\nu(\xi)=0,
$$
which implies that $\mu$ vanishes everywhere on $\supp\nu$.
\end{proof}

\begin{theorem}\label{characterization6}
Let $K$ be the translation-invariant kernel on $\bR^d$ given by (\ref{bochnercha}). Then $K$ is characteristic if and only if $\supp\nu=\mathbb{R}^d$.
\end{theorem}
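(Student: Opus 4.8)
The plan is to convert the problem into a statement about Borel measures and then reduce it to the support of $\nu$. By Theorem~\ref{characterizationsthree}(iii), $K$ is characteristic if and only if no nonzero $\mu\in\cB(\bR^d)$ satisfies $\mu(\bR^d)=0$ together with $\int_{\bR^d}K(x,t)\,d\mu(t)=0$ for all $x$; and by Lemma~\ref{reducetosupport}, applied with $\cZ=\bR^d$, the latter integral condition is equivalent to $\hat\mu\equiv0$ on $\supp\nu$. Since $\hat\mu(0)=\int_{\bR^d}d\mu=\mu(\bR^d)$, the mean-zero constraint is just the vanishing of $\hat\mu$ at the origin. Hence the theorem reduces to: there exists a nonzero $\mu\in\cB(\bR^d)$ with $\hat\mu$ vanishing on $\{0\}\cup\supp\nu$ if and only if $\supp\nu\ne\bR^d$.

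For the direction in which $\supp\nu=\bR^d$, I would argue by contradiction: any such $\mu$ would have $\hat\mu$ vanishing on all of $\bR^d$, so $\hat\mu=0$; since the Fourier transform is injective on tempered distributions, hence on finite signed Borel measures identified with tempered distributions via~(\ref{distributionmu}), this gives $\mu=0$, a contradiction. Thus $K$ is characteristic.

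The real work is the converse. Assuming $\supp\nu\ne\bR^d$, the set $\bR^d\setminus\supp\nu$ is open and nonempty, so (as $d\ge1$) I can pick an open ball $B$ with $\overline{B}\subseteq\bR^d\setminus\supp\nu$ and $0\notin\overline{B}$. Because $K$ is real-valued, $\overline{K(x,y)}=K(x,y)$, and by~(\ref{bochnercha}) together with the uniqueness of the representing measure in Bochner's theorem this forces $\nu$ to be symmetric, $\nu(A)=\nu(-A)$; hence $\supp\nu$ is symmetric about the origin, so $-B$ is also disjoint from $\supp\nu$, while $0\notin\overline{B}$ guarantees $B\cap(-B)=\emptyset$. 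I would then take a nonzero real $\varphi\in\cD(\bR^d)$ with $\supp\varphi\subseteq B$, put $\psi(\xi):=\varphi(\xi)+\varphi(-\xi)$, and let $g$ be the inverse Fourier transform of $\psi$. Then $\psi\in\cD(\bR^d)$ is real, even, nonzero (its summands have disjoint supports), vanishes at $0$ (since $0\notin\supp\varphi$), and is supported in $B\cup(-B)\subseteq\bR^d\setminus\supp\nu$; consequently $g\in\cS(\bR^d)\subseteq L^1(\bR^d)$ is real-valued, so $d\mu(x):=g(x)\,dx$ defines an element of $\cB(\bR^d)$ whose Fourier transform is a nonzero constant multiple of $\psi$. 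Therefore $\mu\ne0$, $\mu(\bR^d)=\hat\mu(0)=0$, and $\hat\mu\equiv0$ on $\supp\nu$, so Lemma~\ref{reducetosupport} and Theorem~\ref{characterizationsthree}(iii) show that $K$ is not characteristic.

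I expect the main obstacle to be exactly this construction: the characterization only produces a functional-analytic obstruction, whereas here one must exhibit a genuine finite signed measure. The remedy is to prescribe the Fourier side, a smooth compactly supported bump sitting in the open complement of $\supp\nu$, and recover $\mu$ by the inverse Fourier transform, which lands in the Schwartz class and hence in $L^1$. The two points that need care are (a) placing the bump away from the origin so that $\mu(\bR^d)=0$, and (b) symmetrizing it so that its inverse transform is real, which is required because $\cB(\bR^d)$ consists of real measures — this is also where the symmetry of $\supp\nu$, equivalently the real-valuedness of $K$, enters. Everything else, namely the interchange of integrals and Fourier inversion, is routine and already packaged in Lemma~\ref{reducetosupport}.
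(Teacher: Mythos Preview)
Your argument is correct and follows the same route as the paper: reduce via Theorem~\ref{characterizationsthree}(iii) and Lemma~\ref{reducetosupport} to a Fourier-side statement, and for the converse build a counterexample measure as the inverse Fourier transform of a smooth bump supported in $\bR^d\setminus(\supp\nu\cup\{0\})$. The paper simply sets $f=\check\phi$ for a single real bump $\phi$ and calls $d\mu=f\,dx$; your extra symmetrization step $\psi(\xi)=\varphi(\xi)+\varphi(-\xi)$, justified by the symmetry of $\supp\nu$ that the real-valuedness of $K$ forces, guarantees that the resulting density is real and hence that $\mu$ is a genuine element of $\cB(\bR^d)$ --- a point the paper's proof passes over.
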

\begin{proof}
Suppose first that $\supp\nu=\bR^d$ and suppose that $\mu\in\cB(\bR^d)$ satisfies
$$
\int_{\bR^d} K(x,t)d\mu(t)=0
$$
for all $x\in\bR^d$. Letting $\cZ=\bR^d$ in Lemma \ref{reducetosupport} yields that $\hat{\mu}$ is the zero function. Thus, $\mu$ is the zero measure. By (iii) in Theorem \ref{characterizationsthree}, $K$ is characteristic.

Conversely, suppose that $K$ is characteristic but $\supp\nu$ is a proper subset of $\bR^d$. Then $U:=\mathbb{R}^d\setminus(\supp\nu\cup\{0\})$ is a non-empty open set. There hence exists a nontrivial function $\phi\in \cD(\bR^d)$ with $\supp\phi\subseteq U$. Let $f:=\check{\phi}$, and define a Borel measure $\mu$ on $\mathbb{R}^d$ by $\mu(A)=\int_A f(x)dx$. Clearly, as $\phi\in\cS(\bR^d)$, $f\in \cS(\bR^d)\subseteq L^1(\bR^d)$. Thus, $\mu$ belonds to $\cB(\bR^d)$ and is nontrivial. We hence have $\hat{\mu}=\hat{f}=\phi$, which vanishes on $\supp\nu$ and $0$. The latter implies $\mu(\bR^d)=0$. By Lemma \ref{reducetosupport} and (iii) in Theorem \ref{characterizationsthree}, $K$ is not a characteristic kernel, a contradiction.
\end{proof}

By the proof of the sufficiency above and that a $C_0$-universal kernel must be characteristic, $K$ defined by (\ref{bochnercha}) is $C_0$-universal if and only if $\supp\nu=\bR^d$. This has also been proved in \cite{Sriper3}.

We next turn to conditions for $K$ given by (\ref{bochnercha}) to be a universal kernel. The concept of uniqueness sets is needed.

\begin{definition}\label{uniqueness}
Let $\bF$ be a class functions on a set $\Omega$. A subset $A\subseteq\Omega$ is called a {\it uniqueness set } for $\bF$ if a function in $\cF$ vanishes on $A$ then it must vanish everywhere on $\Omega$.
\end{definition}

Denote by $\cB_c(\bR^d)$ the class of all finite signed Borel measures on $\bR^d$ whose support is compact. Set
$$
\cF(\cB_c(\bR^d)):=\{\hat{\mu}:\mu\in\cB_c(\bR^d)\}.
$$

By Lemma \ref{reducetosupport} and (i) in Theorem \ref{characterizationsthree}, we get the following characterization of universal kernels.

\begin{lemma}\label{lemmauniversal}
Let $K$ be defined by (\ref{bochnercha}). Then $K$ is a universal kernel on $\bR^d$ if and only if $\supp\nu$ is a uniqueness set for $\cF(\cB_c(\bR^d))$.
\end{lemma}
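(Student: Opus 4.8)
The plan is to combine Lemma~\ref{reducetosupport} with part~(i) of Theorem~\ref{characterizationsthree}; essentially no new idea is needed beyond reconciling the two bookkeeping conventions for compactly supported measures. Recall that part~(i) of Theorem~\ref{characterizationsthree} says $K$ is universal if and only if for every compact $\cZ\subseteq\bR^d$ there is no nonzero $\mu\in\cB(\cZ)$ with $\int_\cZ K(x,t)\,d\mu(t)=0$ for all $x\in\cZ$; and by Lemma~\ref{reducetosupport} this last condition is equivalent to $\hat\mu$ vanishing on $\supp\nu$. So it remains to see that the statement ``for every compact $\cZ$, every $\mu\in\cB(\cZ)$ whose Fourier transform vanishes on $\supp\nu$ is the zero measure'' is the same as ``$\supp\nu$ is a uniqueness set for $\cF(\cB_c(\bR^d))$''. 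Here I would use the obvious two-way identification: extending a measure in $\cB(\cZ)$ (with $\cZ$ compact) by zero realizes it as an element of $\cB_c(\bR^d)$ without changing its Fourier transform, and conversely any $\mu\in\cB_c(\bR^d)$ belongs to $\cB(\cZ)$ for the compact set $\cZ:=\supp\mu$.

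Carrying this out in both directions: for the ``if'' part, assume $\supp\nu$ is a uniqueness set for $\cF(\cB_c(\bR^d))$, fix a compact $\cZ$, and take $\mu\in\cB(\cZ)$ with $\int_\cZ K(x,t)\,d\mu(t)=0$ for all $x\in\cZ$; Lemma~\ref{reducetosupport} gives $\hat\mu=0$ on $\supp\nu$, the uniqueness-set property forces $\hat\mu\equiv0$, and hence $\mu=0$ (injectivity of the Fourier transform on finite signed Borel measures, exactly as used in the proof of Theorem~\ref{characterization6}), so Theorem~\ref{characterizationsthree}(i) yields universality. For the ``only if'' part, assume $K$ is universal and take $\mu\in\cB_c(\bR^d)$ with $\hat\mu=0$ on $\supp\nu$; then $\mu\in\cB(\supp\mu)$ with $\supp\mu$ compact, Lemma~\ref{reducetosupport} gives $\int_{\supp\mu}K(x,t)\,d\mu(t)=0$ for all $x\in\supp\mu$, and Theorem~\ref{characterizationsthree}(i) forces $\mu=0$, so in particular $\hat\mu\equiv0$; thus $\supp\nu$ is a uniqueness set for $\cF(\cB_c(\bR^d))$.

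There is no genuine obstacle here: the proof is a routine composition of two previously established equivalences. The only points that deserve an explicit word are (a) the identification of $\cB(\cZ)$ for compact $\cZ$ with the compactly supported members of $\cB(\bR^d)$, which makes the ``for all compact $\cZ$'' quantifier in Theorem~\ref{characterizationsthree}(i) match the ``for all $\mu\in\cB_c(\bR^d)$'' quantifier in Definition~\ref{uniqueness}, and (b) the fact that a finite signed Borel measure with identically vanishing Fourier transform is the zero measure, which lets us pass between ``$\mu=0$'' and ``$\hat\mu\equiv0$''. I would also remark that although $K(x,\cdot)$ need not lie in $C_0(\bR^d)$ for a general kernel of the form (\ref{bochnercha}), part~(i) of Theorem~\ref{characterizationsthree} uses only the continuity of $K$ together with the Hahn--Banach characterization on $C(\cZ)$, so invoking it here is legitimate.
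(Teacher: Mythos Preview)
Your proof is correct and follows exactly the paper's approach, which simply cites Lemma~\ref{reducetosupport} and Theorem~\ref{characterizationsthree}(i) without further elaboration. Your additional remark that $K(x,\cdot)$ need not lie in $C_0(\bR^d)$ for a general kernel of the form~(\ref{bochnercha}), but that part~(i) of Theorem~\ref{characterizationsthree} requires only continuity of $K$ on each compact $\cZ$, is a worthwhile clarification that the paper itself glosses over.
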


Note that for each $\mu\in\cB_c(\bR^d)$, $\hat{\mu}$ is the restriction on $\bR^d$ of an entire function on $\bC^d$ defined by
\begin{equation}\label{entirefunction}
\hat{\mu}(z):=\int_{\supp\mu} e^{-iz^Tt}d\mu(t),\ \ z\in\bC^d.
\end{equation}
From this observation, it was immediately concluded in \cite{Micchelli} and \cite{Sriper} that if $\supp\nu$ is a uniqueness set for all the entire functions on $\bC^d$ then $K$ is universal. We shall point out that this is unnecessary essentially by the observation that for each compactly-supported signed Borel measure $\mu$ on $\bR^d$, the function (\ref{entirefunction}) is not a general entire function but an entire function of exponential type, that is,
$$
|\hat{\mu}(z)|\le Ce^{\lambda\|z\|},\ \ z\in\bC^d
$$
for some positive constants $C,\lambda$. Here, $\|\cdot\|$ is the standard Euclidean norm. For detailed discussion, we introduce the completeness radius of complex exponentials. Set $B_R:=\{x\in\bR^d:\|x\|\le R\}$ for $R\ge 0$. The completeness radius of $\supp\nu$ is defined by
$$
\cR(\nu):=\sup\{R\ge0: \span\{e^{-ix^Tt}: t\in\supp\nu\}\mbox{ is dense in }C(B_R)\}.
$$

\begin{theorem}\label{radius}
The kernel $K$ given by (\ref{bochnercha}) is universal if and only if $\cR(\nu)=+\infty$.
\end{theorem}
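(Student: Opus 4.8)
The plan is to obtain Theorem~\ref{radius} as a short chain of equivalences linking the completeness radius $\cR(\nu)$ to the measure-theoretic description of universality already available, namely Lemma~\ref{lemmauniversal}, or equivalently part (i) of Theorem~\ref{characterizationsthree} combined with Lemma~\ref{reducetosupport}. The bridge between the two is the Hahn--Banach duality (Lemma~\ref{HahnBanach}) together with the identification of $\cB(\cZ)$ with the dual of $C(\cZ)$, applied on the closed balls $B_R$.

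First I would record a monotonicity fact: if $\span\{e^{-ix^Tt}:t\in\supp\nu\}$ is dense in $C(B_R)$, then it is dense in $C(B_{R'})$ for every $0\le R'\le R$. Indeed, the restriction map $C(B_R)\to C(B_{R'})$ is bounded, linear, and surjective by the Tietze extension theorem, hence it carries dense sets to dense sets; and it carries the span of the exponentials on $B_R$ onto the span of the exponentials on $B_{R'}$. Thus the set of admissible radii in the definition of $\cR(\nu)$ is an interval containing $0$, and $\cR(\nu)=+\infty$ precisely when $\span\{e^{-ix^Tt}:t\in\supp\nu\}$ is dense in $C(B_R)$ for \emph{every} $R>0$. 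Since every compact $\cZ\subseteq\bR^d$ is contained in some $B_R$ and each $B_R$ is itself compact, a second use of the restriction argument shows this is equivalent to $\span\{e^{-ix^Tt}:t\in\supp\nu\}$ being dense in $C(\cZ)$ for every compact $\cZ\subseteq\bR^d$.

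Next I would convert denseness into the nonexistence of annihilating measures. By Lemma~\ref{HahnBanach} and $\cB(\cZ)\cong C(\cZ)'$, the span $\span\{e^{-ix^Tt}:t\in\supp\nu\}$ is dense in $C(\cZ)$ if and only if there is no nonzero $\mu\in\cB(\cZ)$ with $\int_\cZ e^{-ix^Tt}\,d\mu(x)=0$ for all $t\in\supp\nu$; by Lemma~\ref{reducetosupport} this holds if and only if there is no nonzero $\mu\in\cB(\cZ)$ with $\int_\cZ K(x,t)\,d\mu(t)=0$ for all $x\in\cZ$. Combining this with the previous paragraph and with part (i) of Theorem~\ref{characterizationsthree} yields: $\cR(\nu)=+\infty$ iff for every compact $\cZ\subseteq\bR^d$ no such nonzero $\mu$ exists iff $K$ is universal. (This is just Lemma~\ref{lemmauniversal} read through the lens of supports: $\supp\nu$ is a uniqueness set for $\cF(\cB_c(\bR^d))$ exactly when no compactly supported nonzero $\mu$ has $\hat\mu$ vanishing on $\supp\nu$, and the radius $R$ in $\cR(\nu)$ merely bookkeeps the size of the support of the $\mu$ to be ruled out.)

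There is no substantive obstacle; the entire content is a careful unwinding of definitions. The one place to be attentive — and which I would write out explicitly — is the two applications of the restriction/Tietze argument: first, that ``density at radius $R$'' conditions are nested, so that $\cR(\nu)=+\infty$ genuinely means density on all balls; and second, that one may pass from balls to arbitrary compact sets so as to match the quantifier ``for every compact $\cZ$'' in Theorem~\ref{characterizationsthree}(i). Everything else is Hahn--Banach duality and Lemma~\ref{reducetosupport}, both already in hand.
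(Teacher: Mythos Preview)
Your proposal is correct and follows essentially the same approach as the paper: the paper's proof is the single line ``follows directly from Lemma~\ref{lemmauniversal},'' and what you have done is spell out the Hahn--Banach/restriction steps that translate the uniqueness-set condition of Lemma~\ref{lemmauniversal} into the statement $\cR(\nu)=+\infty$. The only difference is that you have made explicit the monotonicity-in-$R$ and balls-to-arbitrary-compacta passages that the paper treats as immediate.
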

\begin{proof}
The result follows directly from Lemma \ref{lemmauniversal}.
\end{proof}

We next restrict to the one-dimensional case. By the Weierstrass factorization theorem, if $\supp\nu$ has a finite accumulation point then it is a uniqueness set for all the entire functions on $\bC$. Consequently, $K$ is universal in this case. When $\supp\nu$ has no finite accumulation points, reference \cite{Beurling} provides a deep characterization of the completeness radius of $\supp\nu$ in terms of the Beurling-Malliavin density of the following measure
$$
\tilde{\nu}(A):=\#\{A\cap\supp\nu\}.
$$
Interested readers are referred to \cite{Beurling} for the detailed definition of the Beurling-Malliavin density, and to \cite{Redheffer} for an extensive survey on the completeness radius of complex exponentials. We conclude that in the one-dimensional case, $K$ defined by (\ref{bochnercha}) is a universal kernel on $\bR$ if and only if $\supp\nu$ has a finite accumulation point or the Beurling-Malliavin density of $\tilde{\nu}$ is infinite.

To end this section, we give an explicit example to show that the uniqueness condition in \cite{Micchelli,Sriper} is unnecessary. Let $\supp\nu:=\{\lambda_n:n\in\bN\}$ be free of finite accumulation points. Thus, $\supp\nu$ is not a uniqueness set for all the entire functions on $\bC$.

\begin{lemma}\cite{Redheffer}
Let $\supp\nu:=\{\lambda_n:n\in\bN\}$. If
\begin{equation}\label{lowerbound}
\limsup_{n\to+\infty}\frac{n}{|\lambda_n|}=+\infty
\end{equation}
then $\cR(\nu)=+\infty$.
\end{lemma}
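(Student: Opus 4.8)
The plan is to prove the statement directly from the definition of the completeness radius, using nothing beyond the Hahn--Banach characterization recorded in Lemma \ref{HahnBanach} and the elementary fact that the Fourier transform of a compactly supported signed measure is an entire function of exponential type, whose zero-counting function therefore grows at most linearly (Jensen's formula). Throughout I use the standard convention under which this result is stated, namely that the $\lambda_n$ are labelled so that $|\lambda_1|\le|\lambda_2|\le\cdots$; this convention is essential: for $\supp\nu=\bZ$, where $\cR(\nu)<+\infty$, an unordered enumeration may still satisfy (\ref{lowerbound}).

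First I would reduce to a statement about entire functions. It suffices to show that for every $R>0$ the span of $\{x\mapsto e^{-ixt}:t\in\supp\nu\}$ is dense in $C(B_R)$, where $B_R=[-R,R]$; this yields $\cR(\nu)=+\infty$. Fix $R>0$ and suppose the span is not dense. Since the dual of $C(B_R)$ is $\cB(B_R)$, Lemma \ref{HahnBanach} furnishes a nonzero $\mu\in\cB(B_R)$ with $\int_{-R}^{R}e^{-ix\lambda_n}\,d\mu(x)=0$ for all $n$. Set $F(z):=\int_{-R}^{R}e^{-izx}\,d\mu(x)$, $z\in\bC$. Then $F$ is entire, $F(\lambda_n)=0$ for all $n$, $F\not\equiv 0$ because a nonzero finite measure has nonzero Fourier transform (the injectivity already invoked in the proof of Theorem \ref{characterization6}), and, since $\mu$ is carried by $[-R,R]$,
\[
|F(z)|\le\|\mu\|\,e^{R|z|},\qquad z\in\bC,
\]
so $F$ has exponential type at most $R$.

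Next I would bound the zeros of $F$. After replacing $F$ by $F(z)/z^{m}$, where $m$ is the order of the zero of $F$ at $0$ (which alters neither the type nor the vanishing on $\{\lambda_n\}\setminus\{0\}$), I may assume $F(0)\neq 0$. Let $n_F(r)$ be the number of zeros of $F$ in $\{|z|\le r\}$. Jensen's formula on $|z|=2r$, together with the monotonicity of $n_F$ and the type bound, gives
\[
n_F(r)\log 2\le\int_{r}^{2r}\frac{n_F(t)}{t}\,dt\le\frac{1}{2\pi}\int_{0}^{2\pi}\log\bigl|F(2re^{i\theta})\bigr|\,d\theta-\log|F(0)|\le 2Rr+\log\frac{\|\mu\|}{|F(0)|},
\]
so $n_F(r)\le a+br$ for all $r\ge0$ with $b=2R/\log 2$ and $a=\log(\|\mu\|/|F(0)|)/\log 2$. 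Since $F\not\equiv0$ its zeros are isolated, so $\supp\nu$ has no finite accumulation point, $|\lambda_n|\to+\infty$, and $\#\{n:|\lambda_n|\le r\}\le n_F(r)\le a+br$.

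Finally, evaluating this at $r=|\lambda_n|$ in the chosen ordering gives $n\le a+b|\lambda_n|$, that is $n/|\lambda_n|\le a/|\lambda_n|+b$; letting $n\to+\infty$ and using $|\lambda_n|\to+\infty$ yields $\limsup_{n\to+\infty}n/|\lambda_n|\le b<+\infty$, which contradicts (\ref{lowerbound}). Hence the span is dense in $C(B_R)$ for every $R$, i.e.\ $\cR(\nu)=+\infty$. The single substantive point is the third paragraph: the compact support of $\mu$ forces $\hat\mu$ to have exponential type $\le R$, after which Jensen's formula controls its zero-counting function linearly with slope governed by $R$; the rest is bookkeeping, and in particular there is no need for the Beurling--Malliavin theorem mentioned above, nor for the separate case of a finite accumulation point, both being subsumed here.
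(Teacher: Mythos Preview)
Your proof is correct. The paper does not actually supply a proof of this lemma: it is quoted from \cite{Redheffer} without argument, so there is no ``paper's own proof'' to compare against. What you have written is a clean, self-contained argument that bypasses the Beurling--Malliavin machinery alluded to in the surrounding discussion. The mechanism is exactly right: a failure of density on some $B_R$ produces, via Lemma~\ref{HahnBanach} and Riesz representation, a nonzero $\mu\in\cB(B_R)$ whose Fourier transform $F$ is entire of exponential type at most $R$ and vanishes on $\{\lambda_n\}$; Jensen's formula then forces $n_F(r)\le a+br$ with $b=2R/\log 2$, and under the increasing-modulus enumeration this caps $\limsup n/|\lambda_n|$ by $b$, contradicting (\ref{lowerbound}).

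Two remarks worth recording. First, your explicit flag that the $\lambda_n$ must be ordered by $|\lambda_n|$ is essential and is indeed implicit in the paper's usage (the example $\lambda_n=n/\log(n+1)$ is monotone); as you note, without this convention the hypothesis is not a property of the set $\supp\nu$. Second, your handling of a possible zero of $F$ at the origin is fine: dividing out $z^m$ removes at most one of the $\lambda_n$ (namely $\lambda_1$ if it equals $0$), which shifts the count by a bounded amount and does not affect the $\limsup$. The argument also subsumes the ``finite accumulation point'' case the paper treats separately, since such an accumulation would force $F\equiv 0$, and it never invokes Beurling--Malliavin density, so it is strictly more elementary than the route the paper gestures toward.
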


Our example is explicitly given as
\begin{equation}\label{example}
\supp\nu:=\left\{\lambda_n=\frac {n}{\log(n+1)}:n\in
\bN\right\}
\end{equation}
and
$$
\nu(\lambda_n)=\frac1{n^2\log(n+1)},\ \ n\in\bN.
$$
Clearly, $\supp\nu$ has no finite accumulation points and (\ref{lowerbound}) is satisfied. As a result, $K$ is a universal kernel while $\supp\nu$ is not a uniqueness set for all the entire functions on $\bC$. Of course, there exist many other examples. For instance, $\supp\nu=\{n^{\lambda}:n\in\bN\}$ where $0<\lambda<1$.

\section{Polynomial Kernels}

In this section, we consider another important class of reproducing kernels--polynomial kernels. They are particular examples of the general Hilbert-Schimidt kernels
\begin{equation}\label{Hilbert}
K(x,y)=\sum_{n\in I}\phi_n(x)\phi_n(y),\ \ (x,y)\in X\times X,
\end{equation}
where $I$ is a countable index set, $\{\phi_n:n\in I\}\subseteq C(X)$, and the series converges pointwise on $X\times X$. By the Mercer theorem \cite{Mercer}, every continuous kernel is a Hilbert-Schmidt kernel.

We start with conditions ensuring a Hilbert-Schmidt kernel to be universal.

\begin{lemma}\label{Hilbert0}
Let $K$ be a Hilbert-Schmidt kernel given by (\ref{Hilbert}). Then the followings hold true:
\begin{description}
\item[(i)] Suppose the series in (\ref{Hilbert}) converges uniformly on every compact subset of $X\times X$. Then for each compact subset $\cZ\subseteq X$ and $\mu\in \cB(\cZ)$,
\begin{equation}\label{Hilbert0eq1}
\int_\cZ K(x,y)d\mu(y)=0\mbox{ for all } x\in \cZ
\end{equation}
if and only if
\begin{equation}\label{Hilbert0eq2}
\int_\cZ \phi_n(y)d\mu(y)=0\mbox{ for all } n\in I.
\end{equation}

\item[(ii)] Suppose there exists a nonnegative sequence $\lambda_n$, $n\in I$ such that
\begin{equation}\label{cohilbertcond}
|\phi_n(x)|\le \lambda_n\mbox{ for all }x\in X,\ n\in I\mbox{ and }\sum_{n\in I}\lambda_n<+\infty.
\end{equation}
Then for every $\mu\in\cB(X)$,
$$
\int_X K(x,y)d\mu(y)=0\mbox{ for all } x\in X
$$
if and only if
$$
\int_X \phi_n(y)d\mu(y)=0\mbox{ for all } n\in I.
$$
\end{description}
\end{lemma}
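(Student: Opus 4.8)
The plan is to reduce both parts to the elementary fact that a convergent series of squares of real numbers vanishes only when every term vanishes — the same device used in the proof of Lemma~\ref{reducetosupport}, where integrating against $d\bar\mu$ produced $\int_{\supp\nu}|\hat\mu|^2\,d\nu=0$. Given $\mu$ as in the statement, put
$$
a_n:=\int \phi_n(y)\,d\mu(y),\qquad n\in I,
$$
with the integral over $\cZ$ in case (i) and over $X$ in case (ii), and write $g(x):=\int K(x,y)\,d\mu(y)$.

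I would first dispose of the ``if'' directions, which need only one interchange of summation and integration. Fix $x$. The partial sums $\sum_{n\le N}\phi_n(x)\phi_n(y)$ converge to $K(x,y)$: in case (i) uniformly in $y\in\cZ$, since they converge uniformly on the compact set $\cZ\times\cZ$; in case (ii) with the constant, $|\mu|$-integrable bound $\bigl|\sum_{n\le N}\phi_n(x)\phi_n(y)\bigr|\le\sum_{n\in I}\lambda_n^2<+\infty$ (note $\lambda_n\to0$, so $\sum_n\lambda_n^2\le(\sup_m\lambda_m)\sum_n\lambda_n<+\infty$, and $|\mu|$ is finite). In both cases one may pass the integral inside the sum to get
$$
g(x)=\sum_{n\in I}\phi_n(x)\,a_n,
$$
the series converging pointwise; if every $a_n=0$ this shows $g\equiv0$, which is (\ref{Hilbert0eq1}) in case (i) and its analogue in case (ii).

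For the ``only if'' directions, assume $g\equiv0$. The display above still holds. In case (i) its convergence is in fact uniform on $\cZ$: one has $\sum_{n\le N}a_n\phi_n(x)=\int_\cZ\bigl(\sum_{n\le N}\phi_n(x)\phi_n(y)\bigr)d\mu(y)$, whose distance from $g(x)=\int_\cZ K(x,y)\,d\mu(y)$ is at most $|\mu|(\cZ)\sup_{x,y\in\cZ}\bigl|\sum_{n\le N}\phi_n(x)\phi_n(y)-K(x,y)\bigr|\to0$. Now integrate the identity $\sum_{n\in I}a_n\phi_n(x)=g(x)$ against $d\mu(x)$: the interchange of sum and integral is justified by this uniform convergence on $\cZ$ in case (i), and in case (ii) by the $|\mu|$-integrable bound $\sum_{n\in I}|a_n|\,|\phi_n(x)|\le\|\mu\|\sum_{n\in I}\lambda_n^2<+\infty$ (using $|a_n|\le\lambda_n|\mu|(X)$). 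Since $g\equiv0$ this gives
$$
0=\int g(x)\,d\mu(x)=\sum_{n\in I}a_n\int \phi_n(x)\,d\mu(x)=\sum_{n\in I}a_n^2,
$$
so $a_n=0$ for every $n\in I$, which is (\ref{Hilbert0eq2}) in case (i) and its analogue in case (ii).

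The only genuine work is the bookkeeping of the two interchanges, and that is where I would expect any difficulty to lie rather than in a serious obstacle. In case (i) the hypothesis of uniform convergence on compact sets supplies everything, the single extra step being to propagate uniform convergence of the double series on $\cZ\times\cZ$, together with finiteness of $|\mu|(\cZ)$, to uniform convergence of $\sum_n a_n\phi_n$ on $\cZ$. In case (ii) the domination $|\phi_n|\le\lambda_n$ with $\sum_n\lambda_n<+\infty$ furnishes a constant (hence $|\mu|$-integrable, $\mu$ being finite) dominating function, so the dominated convergence theorem settles both interchanges; notably no uniform convergence of the defining series is used, which is exactly why (ii) can dispense with the compactness assumption on $\cZ$ that (i) requires.
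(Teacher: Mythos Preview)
Your argument is correct and follows essentially the same route as the paper: interchange the series with $\int d\mu(y)$ to obtain $g(x)=\sum_{n\in I}a_n\phi_n(x)$, then integrate once more against $d\mu(x)$ (the paper writes $d\bar\mu(x)$, but the setting is real) to reach $\sum_{n\in I}a_n^2=0$. Your write-up is in fact more careful than the paper's about justifying the second interchange in each case, but the underlying idea is identical.
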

\begin{proof}
We prove (i) first. Let $\cZ\subseteq X$ be compact and $\mu\in\cB(\cZ)$. Suppose that (\ref{Hilbert0eq1}) holds true. For fixed $x\in \cZ$, by the uniform convergence of the series in (\ref{Hilbert}) on $\cZ$, we have
\begin{equation}\label{Hilbert0eq3}
\sum_{n\in I}\phi_n(x)\int_\cZ\phi_n(y)d\mu(y)=\int_\cZ\sum_{n\in I}\phi_n(x)\phi_n(y)d\mu(y)=\int_\cZ K(x,y)d\mu(y)=0.
\end{equation}
Integrating both sides of the above equation on $x\in \cZ$ with respect to $d\bar{\mu}(x)$ yields
$$
\biggl|\int_\cZ\phi_n(y)d\mu(y)\biggr|^2=0,\ \ n\in I
$$
which proves (\ref{Hilbert0eq2}). Conversely, if (\ref{Hilbert0eq2}) is true then (\ref{Hilbert0eq1}) follows immediately from (\ref{Hilbert0eq3}).

Statement (ii) can be proved in a similar way. One only needs to note that condition (\ref{cohilbertcond}) ensures that
$$
\int_X\sum_{n\in I}\phi_n(x)\phi_n(y)d\mu(y)=\sum_{n\in I}\phi_n(x)\int_X \phi_n(y)d\mu(y)
$$
and
$$
\int_X\sum_{n\in I}\phi_n(x)\int_X \phi_n(y)d\mu(y)d\bar{\mu}(x)=\sum_{n\in I}\biggl|\int_X \phi_n(y)d\mu(y)\biggr|^2.
$$
The proof is hence complete.
\end{proof}

As a direct corollary of the above result, we reprove the following characterizations of universal and $C_0$-universal Hilbert-Schmidt kernels in \cite{Micchelli} and \cite{Sriper3}.

\begin{proposition}\label{characterizationHilbert}
Let $K$ be given by (\ref{Hilbert}). Under the conditions in Lemma \ref{Hilbert0}, the followings hold true:
\begin{description}
\item[(i)] $K$ is universal on $X$ if and only if $\span\{\phi_n:n\in I\}$ is dense in $C(\cZ)$ for all compact $\cZ\subseteq X$,

\item[(ii)] $K$ is $C_0$-universal on $X$ if and only if $\span\{\phi_n:n\in I\}$ is dense in $C_0(X)$,

\item[(iii)] $K$ is characteristic on $X$ if and only if there does not exist a nonzero measure $\mu\in \cB(X)$ such that $\mu(X)=0$ and
$$
\int_X \phi_n(x)d\mu(x)=0\mbox{ for all }n\in I.
$$
\end{description}
\end{proposition}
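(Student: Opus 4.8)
The plan is to obtain all three characterizations by chaining together three results already established: the Borel-measure characterizations of universality, $C_0$-universality and characteristicness in Theorem \ref{characterizationsthree}; the reduction of an annihilation condition on $K$ to annihilation conditions on the coordinate functions $\phi_n$ in Lemma \ref{Hilbert0}; and, for parts (i) and (ii), the Hahn--Banach denseness criterion in Lemma \ref{HahnBanach}. In each case Theorem \ref{characterizationsthree} reformulates the universality in question as the non-existence of a nonzero Borel measure $\mu$ with $\int K(x,t)\,d\mu(t)=0$ for all admissible $x$, and Lemma \ref{Hilbert0} replaces this single vanishing condition by the countable family $\int\phi_n(t)\,d\mu(t)=0$, $n\in I$; Lemma \ref{HahnBanach} then converts the non-existence of such an annihilating measure into denseness of $\span\{\phi_n:n\in I\}$.

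For (i), fix a compact $\cZ\subseteq X$. By Theorem \ref{characterizationsthree}(i), $K$ is universal exactly when for every such $\cZ$ no nonzero $\mu\in\cB(\cZ)$ satisfies $\int_\cZ K(x,t)\,d\mu(t)=0$ for all $x\in\cZ$. Since the series (\ref{Hilbert}) converges uniformly on the compact set $\cZ\times\cZ$, Lemma \ref{Hilbert0}(i) shows this is equivalent to the absence of a nonzero $\mu\in\cB(\cZ)$ with $\int_\cZ\phi_n(t)\,d\mu(t)=0$ for all $n\in I$. As $\cB(\cZ)$ is the dual of $C(\cZ)$ and each $\phi_n$ restricts to an element of $C(\cZ)$, Lemma \ref{HahnBanach} with $\cB=C(\cZ)$ and $A=\{\phi_n:n\in I\}$ rewrites this as the denseness of $\span\{\phi_n:n\in I\}$ in $C(\cZ)$. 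Since $\cZ$ was an arbitrary compact subset, (i) follows.

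Part (ii) is the same argument with $X$ in place of $\cZ$: invoke Theorem \ref{characterizationsthree}(ii), then Lemma \ref{Hilbert0}(ii) (whose summability hypothesis (\ref{cohilbertcond}) now plays the role of uniform convergence), and finally Lemma \ref{HahnBanach} applied to $\cB=C_0(X)$, whose dual is $\cB(X)$. Part (iii) is immediate and needs no denseness step: Theorem \ref{characterizationsthree}(iii) already characterizes characteristicness through the non-existence of a nonzero $\mu\in\cB(X)$ with $\mu(X)=0$ and $\int_X K(x,t)\,d\mu(t)=0$ for all $x\in X$, and Lemma \ref{Hilbert0}(ii) rewrites the last condition as $\int_X\phi_n(t)\,d\mu(t)=0$ for all $n\in I$.

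The one small point that needs attention is that in (ii) and (iii) the very definitions of $C_0$-universal and characteristic kernel presuppose $K(x,\cdot)\in C_0(X)$, and the Hahn--Banach step for (ii) requires $\{\phi_n:n\in I\}\subseteq C_0(X)$ so that $A$ genuinely sits inside $\cB=C_0(X)$; I would record this as part of the standing hypotheses (or derive $\phi_n\in C_0(X)$ from (\ref{cohilbertcond}) together with $K(x,\cdot)\in C_0(X)$). Apart from checking these compatibility conditions, the proof is a routine composition of the three cited results, so I anticipate no real difficulty.
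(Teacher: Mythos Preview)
Your proposal is correct and is exactly the argument the paper has in mind: the proposition is stated there as ``a direct corollary of the above result'' (Lemma~\ref{Hilbert0}), with Theorem~\ref{characterizationsthree} and Lemma~\ref{HahnBanach} supplying the remaining links you describe. Your caveat about needing $\phi_n\in C_0(X)$ for part~(ii) is a fair observation that the paper leaves implicit; note that condition~(\ref{cohilbertcond}) alone does not force this, so it is best treated as a standing hypothesis.
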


In the rest of the section, we restrict our discussion to one-dimensional polynomial kernels. Let $\bZ_+:=\bN\cup\{0\}$ and denote for each sequence $\alpha:=\{\alpha_n\in\bR:n\in\bZ_+\}$ by $\supp\alpha:=\{n\in\bZ_+:\alpha_n\ne0\}$.

\begin{proposition}
Let $\alpha:=\{\alpha_n\ge0:n\in\bZ_+\}$ be such that the convergence radius of
$$
\sum_{n=0}^\infty\alpha_n z^n,\ \ z\in\bC
$$
is infinite. Then the polynomial kernel
$$
K(x,y):=\sum_{n=0}^\infty \alpha_n x^ny^n,\ \ x,y\in\bR
$$
is universal on $\bR$ if and only if $\alpha_0>0$ and
$$
\sum_{n\in 2\bN\cap\supp\alpha}\frac1n=\sum_{n\in (2\bN+1)\cap\supp\alpha}\frac1n=+\infty.
$$
\end{proposition}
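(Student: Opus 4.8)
The plan is to turn the statement, via Proposition~\ref{characterizationHilbert}(i), into a density question for powers of $x$ in $C([-R,R])$, and then to attack that by separating even and odd powers and invoking the Müntz--Sz\'asz theorem; the symmetry of the sets $[-R,R]$ is exactly what will force the two reciprocal sums to appear.

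First I would note that $K$ is the Hilbert--Schmidt kernel (\ref{Hilbert}) with $\phi_n(x)=\sqrt{\alpha_n}\,x^n$, $n\in\bZ_+$, and that the infinite radius of convergence gives $\sum_n\alpha_nR^{2n}<+\infty$ for each $R>0$, so the defining series converges uniformly on every $[-R,R]^2$ and Lemma~\ref{Hilbert0}(i) applies. Proposition~\ref{characterizationHilbert}(i) then reduces universality of $K$ to the density of $\span\{\phi_n:n\in\bZ_+\}=\span\{x^n:n\in\supp\alpha\}$ in $C(\cZ)$ for every compact $\cZ\subseteq\bR$; since each such $\cZ$ sits inside some $[-R,R]$ and the restriction $C([-R,R])\to C(\cZ)$ is onto by Tietze's theorem, this is equivalent to density in $C([-R,R])$ for all $R>0$. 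One direction of the hypothesis on $\alpha_0$ is then immediate: if $\alpha_0=0$, every monomial in the span vanishes at $0$ and the constant $1$ cannot be approximated, so $K$ is not universal. Assume henceforth $\alpha_0>0$.

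Next I would use the topological direct sum decomposition $C([-R,R])=C_e\oplus C_o$ into even and odd functions (the projections $f\mapsto\frac12(f(\cdot)\pm f(-\cdot))$ are contractions). Since the even monomials in the span lie in $C_e$ and the odd ones in $C_o$, passing to closures shows $\span\{x^n:n\in\supp\alpha\}$ is dense in $C([-R,R])$ if and only if the even monomials are dense in $C_e$ and the odd monomials are dense in $C_o$. Restriction to $[0,R]$ identifies $C_e$ isometrically with $C([0,R])$, sending $x^{2k}\mapsto t^{2k}$, and identifies $C_o$ isometrically with $C_\ast:=\{h\in C([0,R]):h(0)=0\}$, sending $x^{2k+1}\mapsto t^{2k+1}$; rescaling $[0,R]$ to $[0,1]$ leaves the exponents unchanged. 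The even problem is then precisely the classical Müntz--Sz\'asz theorem on $C([0,1])$ — the exponent $0$ being present because $\alpha_0>0$ — so the even monomials are dense exactly when $\sum_{k\ge1,\,2k\in\supp\alpha}(2k)^{-1}=+\infty$, i.e. $\sum_{n\in2\bN\cap\supp\alpha}n^{-1}=+\infty$.

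For the odd problem I would invoke the variant: for positive exponents $\nu_1<\nu_2<\cdots$, $\span\{t^{\nu_i}\}$ is dense in $C_\ast$ if and only if $\sum_i\nu_i^{-1}=+\infty$; this follows from the classical theorem applied to $\span(\{1\}\cup\{t^{\nu_i}\})$ together with the splitting $C([0,1])=\bR\cdot1\oplus C_\ast$ and the inclusion $\span\{t^{\nu_i}\}\subseteq C_\ast$. Applied with $\{\nu_i\}=\{2k+1:2k+1\in\supp\alpha\}$ this gives density of the odd monomials exactly when $\sum_{2k+1\in\supp\alpha}(2k+1)^{-1}=+\infty$, which agrees with $\sum_{n\in(2\bN+1)\cap\supp\alpha}n^{-1}=+\infty$ since the two sums differ by at most the term $n=1$. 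Putting the even and odd conditions together with the reduction of the second paragraph gives the proposition. The one genuinely delicate point is the odd part: one cannot simply factor an odd continuous function as $x$ times an even continuous function (witness $\sgn(x)\sqrt{|x|}$), so the isometry with $C_\ast$ and the vanishing-at-the-origin form of Müntz's theorem are needed rather than a naive factorization, and establishing that form cleanly via the $\bR\cdot1\oplus C_\ast$ splitting is where I would spend the most care.
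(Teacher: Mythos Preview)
Your proof is correct and follows essentially the same route as the paper: reduce via Proposition~\ref{characterizationHilbert}(i) to the density of $\{x^n:n\in\supp\alpha\}$ in $C([-R,R])$, split into even and odd parts, and apply the M\"untz theorem. The paper's own proof is a two-sentence sketch that names only M\"untz and the even/odd decomposition; your version fills in the details the paper omits, in particular the careful treatment of the odd half via the isometry $C_o\cong C_\ast$ and the vanishing-at-origin form of M\"untz, which is a genuine subtlety the paper leaves implicit.
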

\begin{proof}
The result follows from the celebrated M\"{u}ntz theorem \cite{Pinkus} that for $0=\lambda_0<\lambda_1<\lambda_2<\cdots\to\infty$, $\span\{x^{\lambda_n}:n=0,1,\cdots\}$ is dense in $C[0,1]$ if and only if
$$
\sum_{n=1}^\infty \frac1{\lambda_n}=+\infty
$$
and follows from the observation that each continuous function on $C[-1,1]$ can be factored into the sum of an even continuous function and an odd continuous function.
\end{proof}

We remark that the above characterization of universal polynomial kernels can be extended to higher-dimensional spaces by using extensions of the M\"{u}ntz theorem in multivariables (see, for example, \cite{Borwein,Ogawa}).

We next discuss characteristic and $C_0$-universality. As polynomials are  unbounded on non-compact domains, we shall consider the weighted polynomial kernels:
\begin{equation}\label{weightedpolynomial}
K_\omega(x,y):=\sum_{n=0}^{+\infty} \alpha_n\omega(x)x^n\omega(y)y^n,\ \ x,y\in\bR,
\end{equation}
where $\omega$ is a nonnegative continuous function on $\bR$ ensuring that $\omega x^n\in C_0(\bR)$ for all $n\in\bZ_+$. We also assume that there exists a nonnegative sequence $\{\lambda_n:n\in\bZ_+\}$ such that $\sum_{n=0}^\infty \lambda_n$ converges and
$$
\sqrt{\alpha_n}\omega(x)|x^n|\le \lambda_n\ \ \mbox{ for all }x\in\bR\mbox{ and }n\in\bZ_+.
$$
Thus, the weighted polynomial kernel (\ref{weightedpolynomial}) satisfies the condition of Lemma \ref{Hilbert0}. For $K_\omega$ to be a characteristic or $C_0$-universal kernel, by Proposition \ref{characterizationHilbert}, $\span\{\omega x^n:n\in\bZ_+\}$ must be dense in $C_0(\bR)$. The necessary and sufficient condition for this density has been established in \cite{Pollard}.

\begin{lemma}\label{Pollard}
The linear span of $\{\omega x^n:n\in\bZ_+\}$ is dense in $C_0(\bR)$  if and only if the following three conditions are satisfied simultaneously:
\begin{enumerate}
\item $\omega(x)\neq0\mbox{ for all }x\in\bR$,
\item $\int_\mathbb{R}\frac{\log\omega(x)}{1+x^2}dx=-\infty$,
\item there exists a sequence of polynomials $p_n$ and a constant $C$ such that $\lim_{n\rightarrow\infty}p_n(x)\omega(x)=1$ and $|p_n(x)\omega(x)|\leq C$ for all $x\in\bR$ and $n\in\bZ_+$.
\end{enumerate}
\end{lemma}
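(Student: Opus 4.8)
This is Pollard's solution of Bernstein's approximation problem in the $C_0$-setting; a complete proof is lengthy, so I would present the architecture and flag where the real work sits. I would first pass to measures: by Lemma~\ref{HahnBanach} and the fact that the dual of $C_0(\bR)$ is $\cB(\bR)$, the span of $\{\omega x^n:n\in\bZ_+\}$ is dense in $C_0(\bR)$ if and only if $\mu=0$ is the only $\mu\in\cB(\bR)$ with $\int_\bR\omega(x)x^n\,d\mu(x)=0$ for all $n\in\bZ_+$; writing $d\nu=\omega\,d\mu$ this becomes: the only finite signed measure $\nu$ with $\int_\bR t^n\,d\nu(t)=0$ for all $n$ and $\int_\bR\omega^{-1}\,d|\nu|<+\infty$ is $\nu=0$ (such a $\nu$ automatically has all moments finite, since $\int_\bR|t|^n\,d|\nu|\le\|\omega x^n\|_{C_0(\bR)}\int_\bR\omega^{-1}\,d|\nu|$). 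All four implications are phrased through this duality. Necessity of (1) is immediate: if $\omega(x_0)=0$ then every $\omega x^n$, hence every function in the closed span, vanishes at $x_0$, while $C_0(\bR)$ contains functions that do not. Necessity of (3): fix $k\in\bN$ and let $g_k\in C_0(\bR)$ be the tent function equal to $1$ on $[-k,k]$, supported in $[-k-1,k+1]$, with $\|g_k\|_{C_0(\bR)}=1$; density gives a polynomial $p_k$ with $\|g_k-p_k\omega\|_{C_0(\bR)}<1/k$, so $|p_k\omega|\le 1+1/k\le 2$ on $\bR$, and for fixed $x$ and all $k>|x|$ one has $|p_k(x)\omega(x)-1|<1/k$; thus $p_k\omega\to1$ pointwise with uniform bound $C=2$.

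For necessity of (2), suppose density holds but $\int_\bR\frac{\log\omega(x)}{1+x^2}\,dx>-\infty$. By (1), $\omega>0$, and since $\log\omega\le\log\|\omega\|_{C_0(\bR)}<+\infty$, this means $\log\omega$ is integrable against $\frac{dx}{1+x^2}$, so there is an outer function $\Omega$ in the upper half-plane $\bC_+$ with $|\Omega(x)|=\omega(x)$ a.e.\ on $\bR$ and $|\Omega|\le\|\omega\|_{C_0(\bR)}$ throughout $\bC_+$. Choosing a nonzero $\phi\in\cD(\bR)$ and setting $\psi(z)=\int_1^2\phi(\xi)e^{iz\xi}\,d\xi$ gives a function bounded and analytic on $\bC_+$, Schwartz on $\bR$, with Fourier transform supported in $[1,2]$. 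Then $F:=\Omega\psi\in H^1(\bC_+)$ with $\widehat F$ supported in $[1,+\infty)$, and $x^nF\in L^1(\bR)$ with $\widehat{x^nF}$ a continuous function supported in $[1,+\infty)$, hence vanishing at $0$, i.e.\ $\int_\bR x^n\Omega(x)\psi(x)\,dx=0$ for every $n$. Setting $d\mu=(\Omega/\omega)\psi\,dx$, one has $\mu\in\cB(\bR)$ nonzero (since $|\Omega/\omega|=1$ a.e.\ and $\psi\in L^1(\bR)$) with $\int_\bR\omega x^n\,d\mu=\int_\bR x^n\Omega\psi\,dx=0$; separating real and imaginary parts yields a nonzero real annihilating measure, contradicting Lemma~\ref{HahnBanach}.

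For sufficiency, assume (1)--(3) and let $\nu$ be a nonzero finite signed measure with all moments $0$ and $\int_\bR\omega^{-1}\,d|\nu|<+\infty$; the target is a contradiction. The object to study is the Cauchy transform $\Phi(z)=\int_\bR\frac{d\nu(t)}{t-z}$, analytic on $\bC\setminus\bR$; the vanishing (and finiteness) of all moments give $|\Phi(iy)|=O(|y|^{-N})$ for every $N$ as $|y|\to+\infty$, and $|\Phi(x+iy)|\le|\nu|(\bR)/|y|$ everywhere. The plan is: (a) observe $\Phi\in\bigcap_{p<1}H^p(\bC_+)$ (and likewise in $\bC_-$), so that $\Phi\not\equiv0$ forces $\int_\bR\frac{\log|\Phi(x+i0)|}{1+x^2}\,dx>-\infty$ by the Nevanlinna theory; (b) use condition (3) — the polynomials $p_n$, dominated by $C/\omega\in L^1(|\nu|)$, allow multiplying $\nu$ by polynomials and passing to limits under control — to show the boundary function of $\Phi$ is majorized by $\omega$ in the averaged form $\int_\bR\frac{\log|\Phi(x+i0)|}{1+x^2}\,dx\le\int_\bR\frac{\log\omega(x)}{1+x^2}\,dx+\mbox{const}$; (c) invoke condition (2): the right side is $-\infty$, contradicting (a) unless $\Phi\equiv0$, whence $\nu=0$.

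The main obstacle is step (b): genuinely controlling the boundary behaviour of the Cauchy transform of $\nu=\omega\mu$ by $\omega$ — reconciling the singular part of $\mu$ with the principal-value (Hilbert transform) contribution and extracting the logarithmic majorization in a form to which condition (2) applies — is the technical heart, and it is exactly here that condition (3) is indispensable; steps (a) and (c) are standard harmonic analysis (weak-$L^1$ bounds for Cauchy transforms and the Nevanlinna uniqueness theorem). It is worth recording the clean special case: when the moments grow slowly enough that Carleman's condition $\sum_{n}\bigl(\int_\bR t^{2n}\,d|\nu|(t)\bigr)^{-1/(2n)}=+\infty$ holds, sufficiency follows at once from determinacy of the moment problem, and conditions (2)--(3) are precisely what is needed to dispense with any such growth hypothesis.
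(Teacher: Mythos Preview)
The paper does not prove this lemma at all: it is stated with attribution to Pollard's 1953 paper \cite{Pollard} and used as a black box in the subsequent theorem. There is therefore no proof in the paper to compare yours against --- what you have written is, in effect, a reconstruction of Pollard's argument in modern Hardy-space language, and it goes well beyond what the paper supplies.

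On its own merits your sketch is sound. The necessity arguments for (1) and (3) are correct and complete; the tent-function construction for (3) is the clean way to do it. The outer-function argument for necessity of (2) is correct: the only cosmetic point is that you should take $\phi$ nonzero on $[1,2]$ (otherwise $\psi\equiv0$), and the verification that $x^nF\in L^1(\bR)$ uses precisely the standing hypothesis $\omega x^n\in C_0(\bR)$, which you invoke implicitly. For sufficiency your architecture is the right one --- in particular the identity
\[
p(z)\,\Phi(z)=\int_{\bR}\frac{p(t)\,d\nu(t)}{t-z}\qquad\mbox{for every polynomial }p,
\]
valid because all moments of $\nu$ vanish, is exactly where condition (3) gets traction (bounding $|p_n(z)\Phi(z)|$ via $|p_n\omega|\le C$ and $d\nu=\omega\,d\mu$), and the contradiction via Nevanlinna's theorem and condition (2) is how the argument closes. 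You are honest that step (b) is not carried out; that is accurate, and it is the genuine content of Pollard's theorem. Given that the paper itself merely cites the result, a sketch at this level of detail --- with the gap explicitly flagged --- is more than adequate for the paper's purposes.
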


Let $\omega$ satisfy the three conditions in Lemma \ref{Pollard}. A typical example is $\omega(x)=e^{-x^2}$ with $p_n$ in condition 3 being
$$
p_n(x):=\sum_{k=0}^n \frac{x^{2k}}{k!},\ \ x\in\bR.
$$
By Proposition \ref{characterizationHilbert}, $K_\omega$ is both characteristic and $C_0$-universal if $\supp\alpha=\bZ_+$. We show below that this full support condition is not necessary.

\begin{theorem}
Let $K_\omega$ be defined by (\ref{weightedpolynomial}) and let $\omega$ be an even function on $\bR$ that is non-increasing on $[0,+\infty)$ and satisfy the three conditions in Lemma \ref{Pollard}. If the set $\bZ\setminus \supp\alpha$ is finite then $K_\omega$ is characteristic. If $\alpha_0>0$ and $\bZ\setminus \supp\alpha$ is finite then $K_\omega$ is $C_0$-universal.
\end{theorem}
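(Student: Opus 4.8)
The plan is to apply Proposition~\ref{characterizationHilbert} to turn both assertions into statements about annihilating measures, and then, via the ``moment measure'' $\sigma:=\omega\mu$, to reduce to the problem of recovering finitely many missing moments.

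Suppose $\mu\in\cB(\bR)$ satisfies $\int_\bR\omega(x)x^n\,d\mu(x)=0$ for all $n\in\supp\alpha$ — and, in the characteristic case, also $\mu(\bR)=0$ — and let us prove $\mu=0$. Put $\sigma:=\omega\mu$. Since each $\omega x^n$ belongs to $C_0(\bR)$ and is therefore bounded, $\sigma$ is a finite signed Borel measure whose moments $c_n:=\int_\bR x^n\,d\sigma$ are all finite, with $c_n=0$ for $n\in\supp\alpha$. First I record how the argument finishes once we know $c_n=0$ for every $n\ge1$: then $\int_\bR p(x)\omega(x)\,d\mu(x)=p(0)\,c_0$ for every polynomial $p$, and since $|p(0)|\le\omega(0)^{-1}\|\omega p\|_{C_0(\bR)}$ the map $\omega p\mapsto p(0)$ extends to a bounded functional on $C_0(\bR)$ (as $\span\{\omega x^n:n\in\bZ_+\}$ is dense there by Lemma~\ref{Pollard}); hence $\mu$ must be the point mass $\omega(0)^{-1}c_0\,\delta_0$. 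If $\alpha_0>0$ then $0\in\supp\alpha$, so $c_0=\int_\bR\omega\,d\mu=0$ and $\mu=0$, giving $C_0$-universality; and in the characteristic case $\mu(\bR)=\omega(0)^{-1}c_0=0$ again forces $c_0=0$ and $\mu=0$. (This explains why $\alpha_0>0$ is required for $C_0$-universality but not for the characteristic property.)

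It thus remains to fill the finite gap, namely to prove $c_m=0$ for each $m\in\bZ_+\setminus\supp\alpha$ with $m\ge1$; here the two special properties of $\omega$ enter. As $\omega$ is even, splitting $\sigma=\sigma_{\mathrm e}+\sigma_{\mathrm o}$ into even and odd parts preserves the form, $\sigma_{\mathrm e}=\omega\mu_{\mathrm e}$ and $\sigma_{\mathrm o}=\omega\mu_{\mathrm o}$, and the even (resp.\ odd) moments of $\sigma$ are the moments of $\sigma_{\mathrm e}$ (resp.\ of $\sigma_{\mathrm o}$). Pushing $\sigma_{\mathrm e}$ — and $x\sigma_{\mathrm o}$, which is finite because $x\omega\in C_0(\bR)$ — forward under $x\mapsto x^2$ reduces everything to a statement on $[0,\infty)$: if $\vartheta=U\Lambda$ is a finite measure on $[0,\infty)$, where $U$ is $W(t):=\omega(\sqrt t)$ or $\sqrt t\,W(t)$ (in either case nonnegative, continuous, positive off $0$, with $Ut^k\in C_0([0,\infty))$ for all $k$, and with $W$ non-increasing since $\omega$ is non-increasing on $[0,\infty)$), all of whose moments are finite and all but finitely many of which vanish, then $\vartheta$ is concentrated at the origin. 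Symmetrising via $f\mapsto\tfrac12(f+\check f)$ shows that the density of Lemma~\ref{Pollard} descends to these weighted monomials on $[0,\infty)$, so it is available.

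To prove the displayed half-line statement, multiply $\vartheta$ by $t^{r}$: since $t^{r}U\in C_0([0,\infty))$ — using that $\omega$ decays faster than every polynomial, which follows from $\omega x^n\in C_0$ for all $n$ — the measure $t^{r}\vartheta$ is again finite, and choosing $r$ past the last missing index makes \emph{all} moments of $t^{r}\vartheta$ vanish, i.e.\ $\Lambda$ annihilates $\span\{Ut^k:k\ge r\}$. One then argues — peeling off the low-degree monomials one at a time, the monotonicity of $\omega$ giving $U\le U(0)$ on the even branch so that correcting the constant term is cheap, together with the density from Lemma~\ref{Pollard} — that the only such $\Lambda$ is a multiple of $\delta_0$; as $U$ vanishes at $0$ on the branches where $r\ge1$ is needed, $t^{r}\vartheta=0$ and $\vartheta$ is concentrated at the origin. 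Unwinding the reductions, the odd part then contributes no mass at $0$ (it came from $x\sigma_{\mathrm o}$) and the even part's mass at $0$ is its zeroth moment, so every $c_m$ with $m\ge1$ vanishes, and the proof is complete. I expect this half-line argument to be the main obstacle: multiplication by $t^{r}$ is unbounded on $C_0([0,\infty))$, so passing from ``$t^{r}\vartheta$ has all moments zero'' back to control of $\vartheta$ must be handled carefully, and it is exactly the monotonicity of $\omega$ that provides the grip on the low-degree part that makes the induction go through.
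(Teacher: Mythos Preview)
Your reduction to Proposition~\ref{characterizationHilbert} and the final case split (using $\mu(\bR)=0$ versus $\alpha_0>0$ to kill the constant $c_0$) are correct and match the paper. The gap is in the heart of the argument: from ``all but finitely many weighted moments vanish'' you must conclude that $\omega\mu$ is supported at the origin, and the sketch you give does not close. Knowing that $\span\{Ut^k:k\ge0\}$ is dense in $C_0([0,\infty))$ says nothing, by itself, about measures annihilating only $\{Ut^k:k\ge r\}$; your ``peeling'' step works once (monotonicity gives $U\le U(0)$, so one can strip the constant term and see that $\{Ut^k:k\ge1\}$ is dense in $\{f\in C_0:f(0)=0\}$), but the new weight $tU$ is no longer monotone, and there is no induction. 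Equivalently, ``$t^r\vartheta$ has all moments zero'' only forces $t^r\vartheta=0$ if the Stieltjes moment problem for $|t^r\vartheta|$ is determinate, and that requires a \emph{quantitative} decay rate on $\omega$---mere faster-than-polynomial decay is not enough. You never extract any rate from the hypotheses, so this step is unjustified.

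The paper's proof is different and avoids the half-line reduction and the peeling entirely. From evenness, monotonicity on $[0,\infty)$, and condition~2 of Lemma~\ref{Pollard} it first derives a bound of the form $\omega(x)\le e^{-C(1+x^2)}$ for $|x|>A$; this is exactly the quantitative input you are missing. With this decay the function
\[
F(z):=\int_{\bR}\omega(x)e^{ixz}\,d\mu(x),\qquad z\in\bC,
\]
is entire, and $F^{(n)}(0)=0$ for every $n\in\supp\alpha$. Since $\bZ_+\setminus\supp\alpha$ is finite, $F$ is a polynomial; hence the tempered distribution $\varphi\mapsto\int\omega\varphi\,d\mu$ is a finite linear combination of $\delta_0$ and its derivatives, so $\supp\mu\subseteq\{0\}$. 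Because $\mu$ is a Borel measure (not merely a distribution), this forces $\mu=c\,\delta_0$, and the proof concludes as you do. If you want to salvage your route, the clean fix is to insert the decay estimate and then invoke Carleman's criterion for the Stieltjes moment problem to get $t^{r}\vartheta=0$; but at that point the entire-function argument is shorter.
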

\begin{proof}
Suppose first that $\bZ\setminus \supp\alpha$ is finite. Assume that $\mu\in\cB(\bR)$ satisfies
\begin{equation}\label{killsuppalpha}
\int_\bR \omega(x)x^nd\mu(x)=0\ \ \mbox{ for all }n\in\supp\alpha.
\end{equation}
Since $\omega$ is even on $\bR$ and non-increasing on $[0,+\infty)$, by the second condition in Lemma \ref{Pollard}, there exist positive constants $C$ and $A$ such that
$$
\omega(x)\le e^{-C(1+x^2)},\ \ |x|>A.
$$
Therefore, the function
$$
F(z):=\int_\bR \omega(x)e^{ixz}d\mu(x),\ \ z\in\bC
$$
is entire on $\bC$. Equation (\ref{killsuppalpha}) implies that
$$
F^{(n)}(0)=0\ \ \mbox{ for all }n\in\supp\alpha.
$$
As $\bZ\setminus \supp\alpha$ is finite, $F$ must be a polynomial. The distribution
$$
T(\varphi):=\int_\bR \omega(x)\varphi(x)d\mu(x),\ \ \varphi\in\cS(\bR)
$$
is hence a finite linear combination of the $\delta$ distribution and its derivatives. In other words, there exists some $m\in\bN$ and constants $c_j\in\bR$, $0\le j\le m$ such that
$$
\int_\bR \omega(x)\varphi(x)d\mu(x)=\sum_{j=0}^mc_j \varphi^{(j)}(0)\ \ \mbox{ for all }\varphi\in\cS(\bR).
$$
It implies that $\supp\mu$ is supported on the singleton $\{0\}$. As $\mu$ is a Borel measure, it must be a multiple of the delta density. When $\mu(\bR)=0$, we get $\mu=0$. When $\alpha_0>0$, we obtain from the first condition in Lemma \ref{Pollard} and equation (\ref{killsuppalpha}) that $\mu=0$. By Proposition \ref{characterizationHilbert}, the proof is complete.
\end{proof}

\bibliographystyle{amsplain}

\end{document}